\newtheorem{thm}{Theorem}
\newtheorem{defn}{Definition}
\newtheorem{rem}{Remark}
\begin{document}

\title{Robot Safe Planning In Dynamic Environments Based On Model Predictive Control Using Control Barrier Function}

\author{Zetao Lu, Kaijun Feng, Jun Xu,~\IEEEmembership{Senior Member,~IEEE}, Haoyao Chen,~\IEEEmembership{Member,~IEEE} and Yunjiang Lou,~\IEEEmembership{Senior Member,~IEEE}
\thanks{This work was supported in part by the National Natural Science Foundation of China under Grant 62173113, and in part by the Science and Technology Innovation Committee of Shenzhen Municipality under Grant GXWD20231129101652001, and in part by Natural Science Foundation of Guangdong Province of China under Grant 2022A1515011584. (\textit{Corresponding author: Jun Xu.})}
\thanks{The authors are with the School of Mechanical Engineering and Automation, Harbin Institute of Technology Shenzhen, Shenzhen, Guangdong, China, 518055 (\textit{email: \{22S153114, 21S153137\}@stu.hit.edu.cn; \{xujunqgy, hychen5, louyj\}@hit.edu.cn}).}}

\markboth{Journal of \LaTeX\ Class Files,~Vol.~14, No.~8, August~2021}%
{Shell \MakeLowercase{\textit{et al.}}: A Sample Article Using IEEEtran.cls for IEEE Journals}


\maketitle

\begin{abstract}
Implementing obstacle avoidance in dynamic environments is a challenging problem for robots. Model predictive control (MPC) is a popular strategy for dealing with this type of problem, and recent work mainly uses control barrier function (CBF) as hard constraints to ensure that the system state remains in the safe set. However, in crowded scenarios, effective solutions may not be obtained due to infeasibility problems, resulting in degraded controller performance. We propose a new MPC framework that integrates CBF to tackle the issue of obstacle avoidance in dynamic environments, in which the infeasibility problem induced by hard constraints operating over the whole prediction horizon is solved by softening the constraints and introducing exact penalty, prompting the robot to actively seek out new paths. At the same time, generalized CBF is extended as a single-step safety constraint of the controller to enhance the safety of the robot during navigation. The efficacy of the proposed method is first shown through simulation experiments, in which a double-integrator system and a unicycle system are employed, and the proposed method outperforms other controllers in terms of safety, feasibility, and navigation efficiency. Furthermore, real-world experiment on an MR1000 robot is implemented to demonstrate the effectiveness of the proposed method.
\end{abstract}

\begin{IEEEkeywords}
Collision avoidance, model predictive control, control barrier function, autonomous vehicle navigation.
\end{IEEEkeywords}

\section{Introduction}
\IEEEPARstart{I}{n} recent years, with the continuous development of robot technology, the application scope of robots is no longer limited to industrial manufacturing scenarios, but has also been expanding to other industries, such as autonomous driving, inspection robots, disinfection robots and delivery robots \cite{lee2023survey}. In order to achieve safe navigation of robots in dynamic and shared environments, it is of great significance to design a safety-critical controller to enable the autonomous system to achieve optimal performance while ensuring safety. Some recent work combines the control barrier function (CBF) with model predictive control (MPC) \cite{zeng2021safety} to implement such a safety-critical controller and applies it to dynamic environments by extending CBF to dynamic CBF (D-CBF) \cite{jian2023dynamic}. However, due to the existence of state constraints, applying CBF as hard constraints to the entire prediction horizon may lead to failure in solving the optimization problem, which is particularly obvious in complex dynamic environments.

In order to solve the aforementioned problems and achieve better control effects, we transformed the CBF hard constraints into soft constraints and incorporated them into the penalty function of the optimization problem. Besides, a single-step CBF is imposed to enhance safety. Through our approach, the robot is able to significantly reduce the probability of solution failure in dynamic environments and reach its destination with higher efficiency and safety.

\subsection{Related Work}
Existing robot navigation work in dynamic environments can be divided into three categories: 1) reactive based; 2) learning based; 3) optimization based. In reactive-based approaches, the robot makes one-step optimal action based on information about dynamic obstacles in the current environment, including velocity obstacle (VO) \cite{fiorini1998motion} and its variants \cite{van2011reciprocal, gonon2021reactive}. However, these types of methods usually do not take into account the robot's kinematic constraints. Relying solely on current state information can result in short-term, oscillatory, and unnatural behavior, which does not facilitate pedestrian understanding of the robot's movement intentions\cite{chen2021interactive}. In learning-based approaches, robots endeavor to emulate appropriate navigation strategies. By imitating the interactive movements of pedestrians, they aim to navigate through dense crowds in a manner that is more socially acceptable. Deep reinforcement learning is often used to train computationally efficient navigation strategies \cite{chen2019crowd, yang2023rmrl}, which implicitly encodes interactions and collaborations between pedestrians to generate paths with behavioral patterns more consistent with humans \cite{boldrer2022multi}. However, learning-based methods are dependent on offline training and are then restricted by environmental characteristics, which may encounter generalization issues when transitioning from simulation to real world, i.e., the performance in scenarios not covered by the training data can not be guaranteed. Optimization-based methods usually consist of two consecutive steps of prediction and planning at each time step, first using a motion model to predict dynamic obstacles in the environment \cite{lindqvist2020nonlinear}, and then formulating robot navigation as an optimal control problem \cite{chen2021interactive}. Such methods are usually based on MPC, as it is able to integrate the kinematic constraints and static/dynamic collision constraints of the robot while combining planning with control to find an ideal trajectory \cite{brito2019model}.

However, a significant concern with optimization-based methods in dynamic environments is the safety of the generated trajectories. CBF has recently been introduced as an effective method, combined with MPC \cite{zeng2021safety}, to design safety-critical controllers that can guarantee effective safety margins under a short prediction horizon. In dynamic environments, \cite{jian2023dynamic} implemented obstacle avoidance with a safety-critical controller built based on lidar and dynamic CBF. In a static maze scenario, \cite{thirugnanam2022safety} successfully navigated different robot shapes using relaxation technology \cite{zeng2021enhancing}. However, applying CBF as hard constraints to the entire prediction horizon may lead to failure in solving the optimization problem \cite{zeng2021safety}. \cite{ma2021feasibility} proposed generalized CBF (GCBF) to use CBF constraint as a one-step constraint to improve feasibility, but there is still a trade-off between feasibility and safety. In \cite{zeng2021enhancing}, the trade-off is handled by incorporating slack variables into the CBF constraints to enhance feasibility, although this approach inherently increases the solution time and diminishes the safety margin.

Inspired by these studies, we contemplate the conversion of CBF hard constraints into soft constraints. The goal is to maintain control effects that are comparable to those of hard constraints while minimizing the likelihood of solution failure. This approach inspires robots to actively seek feasible paths in dynamic environments. Concurrently, it's essential to introduce an effective safety guarantee, fulfilled by integrating dynamic GCBF (D-GCBF) constraint. The paper's main focus lies on the application of CBF within the framework of MPC, aiming to enable robots to navigate through crowded and complex dynamic environments efficiently while ensuring safety.

\subsection{Contribution}
The contributions of this paper are as follows.
\begin{itemize}
  \item We propose an MPC framework based on CBF soft constraints for generating safe collision-free trajectories in dynamic environments;
  \item We incorporate D-GCBF within this framework as a single-step hard constraint to enhance safety;
  \item Simulation experiments and real-world tests were carried out to validate the real-time capability, effectiveness, and stability of the algorithm.
\end{itemize}

\subsection{Paper Structure}
This paper is structured as follows. In Section \ref{sec2}, we provide an overview of the definition of the CBF and its associated optimization problem construction when used as hard constraints. In Section \ref{sec3}, we transform CBF hard constraints into soft constraints and derive conditions for exact penalty. Besides, single-step D-GCBF hard constraint is imposed as safety guarantee. In order to verify the effectiveness of the controller design and algorithm, examples of obstacle avoidance of our algorithm in the simulation environment and the real world are demonstrated in Section \ref{sec4}. Section \ref{sec5} concludes the paper.

\section{PRELIMINARIES} \label{sec2}
In this section, we present the preliminaries related to the CBF and propose the basic form of the optimization problem based on MPC and CBF. This lays the foundation for subsequent controller design.
\subsection{Problem Formulation}
Consider the robot's motion model as a discrete-time control system
\begin{equation}
    \label{eq:rbsys}
\mathbf{x}_{k+1} = f(\mathbf{x}_k, \mathbf{u}_k),
\end{equation}
where $\mathbf{x}_k\in\mathcal{X}\subset\mathbb{R}^n$ is the state of the system, $\mathbf{u}_k\in\mathcal{U}\subset\mathbb{R}^m$ is the control input. In a dynamic environment, assuming that the motion equation of a moving obstacle $\mathbf{o}^i$ is
\begin{equation}
    \label{eq:mosys}
\mathbf{o}^i_{k+1} = \xi(\mathbf{o}_k),
\end{equation}
where $\mathbf{o}^i_k\in\mathbb{R}^{n_o}$ represents the state of the moving obstacle at time $k$, the superscript $i\in\{1,2,\dots,N_o\}$ represents the $i$-th moving obstacle, $\xi (\cdot)$ is the state transition function. In robot obstacle avoidance scenarios, the obstacle avoidance problem is usually described using an optimal control problem based on distance constraints\cite{zhang2020optimization}. Assuming that the robot and the moving obstacles are approximated by circles with center points $(x_{t},y_{t})$ and $(x^{i}_{t},y^{i}_{t})$, radii $r_r$ and $r_{\mathbf{o}^i}$ respectively on the two-dimensional plane, then the safe distance between the robot and the moving obstacle $\mathbf{o}^i$ is defined as $r_i = r_r + r_{\mathbf{o}^i} + \epsilon$, where $\epsilon$ is the additional safety margin. So at time step $t$, the  MPC problem based on distance constraints (MPC-DC) is as follows
\begin{subequations}
    \label{eq:mpcdc}
    \begin{align}
        \min_{\mathbf{u}_{t+k|t}}\ & p(\mathbf{x}_{t+N|t})+\sum^{N-1}_{k=0}q(\mathbf{x}_{t+k|t},\mathbf{u}_{t+k|t}) \label{mpcdc:a}\\
        \mathrm{s.t.}\ &\mathrm{for}\ \mathrm{all}\ k=0,\dots,N-1: \notag\\
        &\mathbf{x}_{t+k+1|t}=f(\mathbf{x}_{t+k|t},\mathbf{u}_{t+k|t}), \label{mpcdc:b}\\
        &\mathbf{x}_{t+k|t}\in\mathcal{X}, \label{mpcdc:c}\\
        &\mathbf{u}_{t+k|t}\in\mathcal{U}, \label{mpcdc:d}\\
        &\mathbf{x}_{t|t}=\mathbf{x}_t, \label{mpcdc:e}\\
        &g_i(\mathbf{x}_{t+k|t},\mathbf{o}^i_{t+k|t})\geq0, \label{mpcdc:f}
    \end{align}
\end{subequations}
where $N$ is the prediction horizon. The vectors $\mathbf{x}_{t+k|t}$ and $\mathbf{u}_{t+k|t}$ represent the predicted state and designed input at time step $t + k$, respectively. 
The first term of the cost function (\ref{mpcdc:a}) is the terminal cost, and the latter one is the stage cost. (\ref{mpcdc:c}) and (\ref{mpcdc:d}) represent the state constraints and input constraints along the prediction horizon, respectively. The safety limit distance constraints are represented by (\ref{mpcdc:f}), where $g_i(\mathbf{x}_{t+k|t},\mathbf{o}^i_{t+k|t})=\sqrt{(x_{t+k|t}-x^{i}_{t+k|t})^2+(y_{t+k|t}-y^{i}_{t+k|t})^2}-r_i$.

The optimal solution of this problem at time step $t$ is an input sequence i.e. $\{\mathbf{u}^*_{t|t},\dots, \mathbf{u}^*_{t+N-1|t}\}$. Only the first element $\mathbf{u}^*_{t|t}$ of the optimal solution will become the control input of the system (\ref{eq:rbsys}). Then the above optimization problem is solved repeatedly at the new state $\mathbf{x}_{t+1}$.

\subsection{Control Barrier Function}
In control theory, CBF is a continuously differentiable function used to ensure forward invariance of the system state. When the system state is on the boundary of the invariant set, CBF can adjust the input of the control system to keep it within the invariant set. The definition of CBF is given below based on the concept of safety set in \cite{ames2019control}. Assume that the safe set $\mathcal{C}$ is a super-level set of a continuously differentiable function $h:\mathbb{R}^n\rightarrow\mathbb{R}$:
\begin{subequations}
	\label{eq:ss}
    \begin{align}
        \mathcal{C}&=\{\mathbf{x}\in\mathbb{R}^n:h(\mathbf{x})\geq0\}, \label{ss:a} \\
        \partial \mathcal{C}&=\{\mathbf{x}\in\mathbb{R}^n:h(\mathbf{x})=0\}, \label{ss:b} \\
        \mathrm{Int}(\mathcal{C})&=\{\mathbf{x}\in\mathbb{R}^n:h(\mathbf{x})>0\}. \label{ss:c}
    \end{align}
\end{subequations}
And $\frac{\partial h(\mathbf{x})}{\partial \mathbf{x}}\neq0$ holds for all points on the boundary of the safe set. Then if and only if $\dot h(\mathbf{x})=\frac{\partial h(\mathbf{x})}{\partial \mathbf{x}}\dot{\mathbf{x}}\geq0,\forall \mathbf{x}\in\partial\mathcal{C}$, the set $\mathcal{C}$ is a forward invariant set, that is a safe set.
\begin{defn}[Discrete-time CBF\cite{zeng2021safety}]
Consider the discrete-time system ($\ref{eq:rbsys}$). Given a set $\mathcal{C}$ defined by ($\ref{eq:ss}$) for a function $h:\mathbb{R}^n\rightarrow\mathbb{R}$, the function $h$ is a discrete-time CBF if there exists a function $\gamma\in\mathcal{K}_{\infty}$ s.t.
\begin{equation}
    \label{eq:hx1}
\Delta h(\mathbf{x}_k,\mathbf{u}_k)\geq-\gamma (h(\mathbf{x}_k)),
\end{equation}
where $\Delta h(\mathbf{x}_k,\mathbf{u}_k):=h(\mathbf{x}_{k+1})-h(\mathbf{x}_{k})$.
\end{defn}

When discrete-time CBF is used as constraints in the safety-critical MPC design, the safety of the system can be fully guaranteed while avoiding static obstacles\cite{zeng2021safety}. In order to better apply it to dynamic scenes, \cite{jian2023dynamic} proposed D-CBF on this basis. Similarly, we assume that the shape of the moving obstacle does not change and modify (\ref{eq:hx1}) to the following form
\begin{equation}
    \label{eq:hx2}
\Delta h_i(\mathbf{x}_k,\mathbf{u}_k,\mathbf{o}^i_k)\geq-\gamma (h_i(\mathbf{x}_k,\mathbf{o}^i_k)),
\end{equation}
where $\Delta h_i(\mathbf{x}_k,\mathbf{u}_k,\mathbf{o}^i_k):=h(\mathbf{x}_{k+1},\mathbf{o}^i_{k+1})-h(\mathbf{x}_{k},\mathbf{o}^i_k)$. We follow the result of \cite{zeng2021safety} and select the $\mathcal{K}_{\infty}$ function $\gamma(\cdot)$ as a constant $\gamma\in(0,1]$. 

Therefore, assuming that the states of the robot and the moving obstacles at time $t$ are known, the MPC problem based on D-CBF (\ref{eq:hx2}) constraints (MPC-D-CBF) is as follows
\begin{subequations}
    \label{eq:mpcdcbf}
    \begin{align}
        \min_{\mathbf{u}_{t+k|t}}\ &p(\mathbf{x}_{t+N|t})+\sum^{N-1}_{k=0}q(\mathbf{x}_{t+k|t},\mathbf{u}_{t+k|t}) \label{mpcdcbf:a}\\
        \mathrm{s.t.}\ &\mathrm{for}\ \mathrm{all}\ k=0,\dots,N-1: \notag\\
        &\mathbf{x}_{t+k+1|t}=f(\mathbf{x}_{t+k|t}, \mathbf{u}_{t+k|t}), \label{mpcdcbf:b}\\
        &\mathbf{x}_{t+k|t}\in\mathcal{X}, \label{mpcdcbf:c}\\
        &\mathbf{u}_{t+k|t}\in\mathcal{U}, \label{mpcdcbf:d}\\
        &\mathbf{x}_{t|t}=\mathbf{x}_t, \label{mpcdcbf:e}\\
        &h_i(\mathbf{x}_{t+k+1|t},\mathbf{o}^i_{t+k+1|t})\geq(1-\gamma)h_i(\mathbf{x}_{t+k|t},\mathbf{o}^i_{t+k|t}), \label{mpcdcbf:f}
    \end{align}
\end{subequations}
where the constraint (\ref{mpcdcbf:f}) guarantees the forward invariance of the safe set $\mathcal{C}$ \cite{zeng2021safety}, and $\mathcal{C}$ is defined in (4). When $\gamma=1$, constraint (\ref{mpcdcbf:f}) degenerates into constraint (\ref{mpcdc:f}), which will lead to a decrease in safety. However, the smaller the value of $\gamma$ is, the more stringent the constraints will become, which makes the optimization problem difficult to solve or even unsolvable.

\begin{figure}[!t]
\centering
\includegraphics[width=0.45\textwidth]{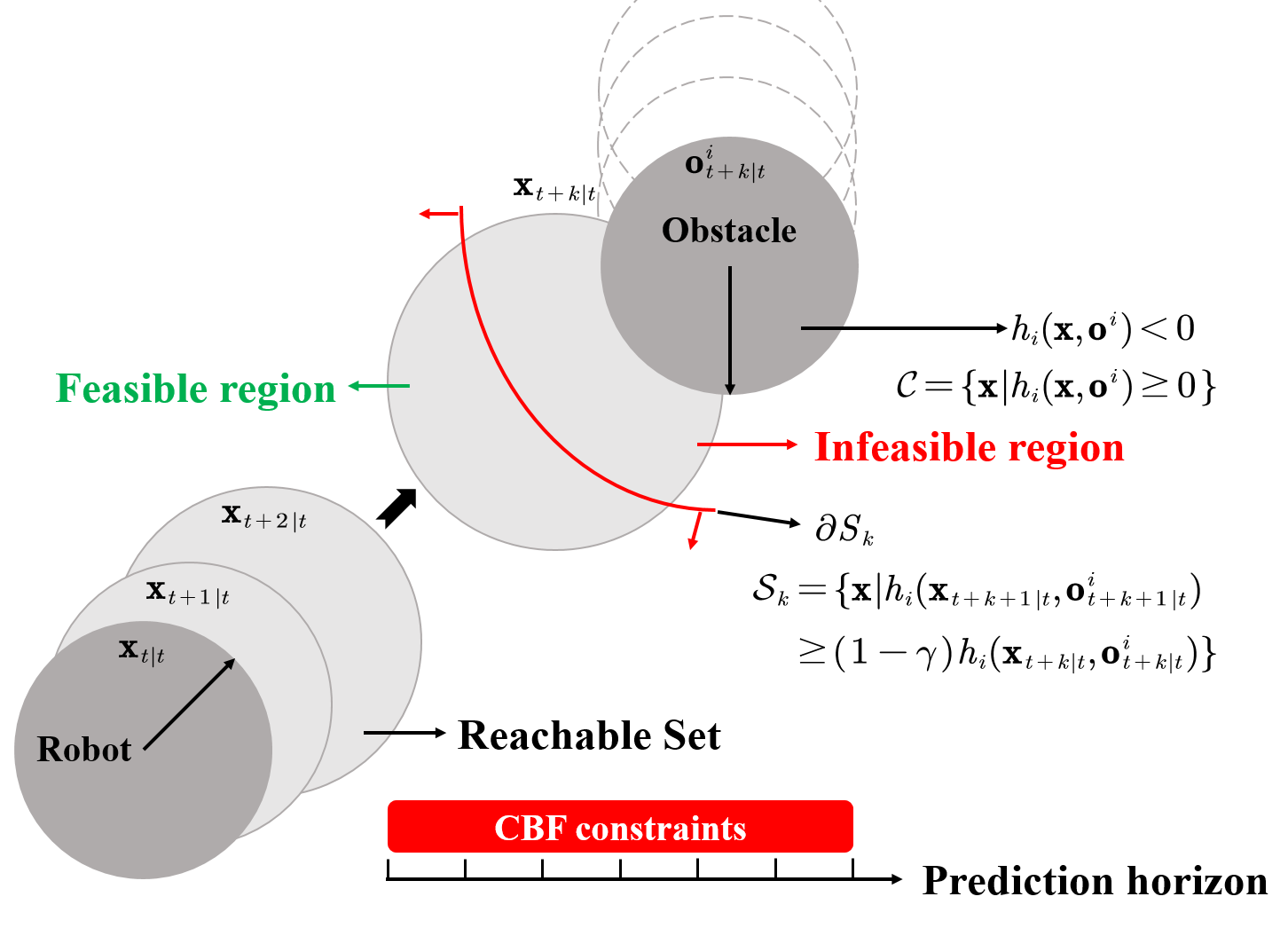}
\caption{Feasibility of optimization problem (\ref{eq:mpcdcbf}). The reachable set propagates along the prediction horizon, starting from the initial state $\mathbf{x}_{t|t}$. The definition of the safety set $\mathcal{C}$ is derived from equation (\ref{eq:ss}). $\mathcal{S}$ represents the state space set that satisfies constraints (\ref{mpcdcbf:c}) and (\ref{mpcdcbf:f}). The red arc represents the boundary of $\mathcal{S}$, and its interior is depicted by a red arrow. The optimization problem is feasible only if the intersection of the reachable set and $\mathcal{S}$ is not empty.}
\label{cbf}
\end{figure}

The CBF hard constraints (\ref{mpcdcbf:f}) in the optimization problem act on the entire prediction horizon. Thus, if there is no feasible region in any step of the prediction horizon, it renders the entire optimization problem infeasible. As seen in Fig. \ref{cbf}, when the CBF constraints are satisfied, the system state must be within the safe set $\mathcal{C}$. However, violating the CBF constraints does not imply that the system state is outside the safe set $\mathcal{C}$. Specifically, even if there is no feasible region at step $k$, it does not mean that the current state lacks a suitable input to avoid collision. Although the optimization problem (\ref{eq:mpcdcbf}) proves efficient and safe in simple scenarios, its performance may degrade in crowded environments. The reason is the application of CBF hard constraints (\ref{mpcdcbf:f}) over the entire prediction horizon could lead to frequent solution failures.

\section{CONTROLLER DESIGN} \label{sec3}
When the optimization problem (\ref{eq:mpcdcbf}) is infeasible, optimal control cannot be obtained, which may reduce the navigation efficiency of the mobile robot in obstacle avoidance scenarios. Hence, in the following, we formulate the relaxed safety control logic to alleviate this problem.
\subsection{Soft Constrained Predictive Control With CBF}\label{sec:soft_constrained}
According to \cite{zeng2021safety}, the infeasibility problem encountered in MPC arises from the intersection of the reachable set and set $\mathcal{S}$, which satisfies CBF constraints at horizon step $k$, being empty. In dynamic scenarios, this problem is especially serious as the set $\mathcal{S}$ is mainly determined by dynamic obstacles and the initial state of the robot, which is dynamically changing. The mobile robot is then more likely to enter a state that is infeasible, so new control inputs cannot be obtained by solving (\ref{eq:mpcdcbf}). In this case, methods such as repeating the control input from the previous moment or calculating the control input without constraints will violate the controller requirements and may lead to unpredictable and dangerous behavior. The most direct way to solve this problem is to adjust the prediction horizon $N$, but this will also change the prediction ability of the controller. When a robot's ability to predict is not adequate, it tends to explore riskier areas more often. This behaviour, in turn, worsens its initial state at future time instants.

In the optimization problem (\ref{eq:mpcdcbf}), the scalar $\gamma$ in the state constraints (\ref{mpcdcbf:e}) is also called a conservative coefficient. This is because we can find a trade-off between safety and feasibility by choosing the value of $\gamma$. However, gaining feasibility by reducing safety is not our original intention. Some state constraint softening methods for MPC have been proposed, such as \cite{di1989exact,kerrigan2000soft}, which can ensure the feasibility of online optimization problems under unexpected disturbances. Inspired by these studies, we modify (\ref{eq:mpcdcbf}) to the soft-constrained  MPC problem based on CBF (SCMPC-CBF)
\begin{subequations}
    \label{eq:our}
    \begin{align}
        \min_{\mathbf{u}_{t+k|t},\mathbf{\zeta}_{t+k|t}}\ &p(\mathbf{x}_{t+N|t})+\sum^{N-1}_{k=0}q(\mathbf{x}_{t+k|t},\mathbf{u}_{t+k|t}) \notag \\
        &+\alpha \sum^{N-1}_{k=0}\|\mathbf{\zeta}_{t+k|t}\| \label{our:a}\\
        \mathrm{s.t.}\ &\mathrm{for}\ \mathrm{all}\ k=0,\dots,N-1: \notag\\
        &\mathbf{x}_{t+k+1|t}=f(\mathbf{x}_{t+k|t},\mathbf{u}_{t+k|t}), \label{our:b}\\
        &\mathbf{x}_{t+k|t}\in\mathcal{X}, \label{our:c}\\
        &\mathbf{u}_{t+k|t}\in\mathcal{U}, \label{our:d}\\
        &\mathbf{x}_{t|t}=\mathbf{x}_t, \label{our:e}\\
        &\mathbf{x}_{t+k|t}\in\mathcal{X}(\mathbf{\zeta}_{t+k|t}),\mathbf{\zeta}_{t+k|t}\geq0, \label{our:f}
    \end{align}
\end{subequations}
where $\mathbf{\zeta}\in\mathbb{R}^{N_o}$ is the slack variable and can be expressed as
$\mathbf{\zeta}_{t+k|t}=[\zeta_{t+k|t}^1, \ldots, \zeta_{t+k|t}^{N_o}]^T$. 
The soft constraint (\ref{our:f}) can be then described as,
\begin{equation}
\begin{array}{rl}
\mathcal{X}(\zeta_{t+k|t})&=\{\mathbf{x}\in \mathbb{R}^n|\zeta_{t+k|t}^i\geq(1-\gamma)h_i(\mathbf{x}_{t+k|t}, \mathbf{o}_{t+k|t}^i)\\
&-h_i(\mathbf{x}_{t+k+1|t}, \mathbf{o}_{t+k+1|t}^i), \forall i=1,\ldots, N_o\}.
\end{array}
\end{equation}
In (\ref{our:a}), $\alpha$ is the constraint violation penalty weight. This ensures that the optimization problem is feasible for any input sequence in $\mathcal{U}$. Even if $\mathbf{x}_{t+k|t}$ does not satisfy the constraint (\ref{mpcdcbf:f}), these violations are penalized in the cost function to determine the value of the slack variables. By constructing in this manner, we can provide an optimal solution that is not only close to that of (\ref{eq:mpcdcbf}) but also ensures feasibility.

\begin{thm}
Given a state $\mathbf{x}_{t}$, if $\mathbf{u}_{t+k|t}^*, k=0, \ldots, N-1$ is the optimal solution to (\ref{eq:mpcdcbf}), then there exists a Lagrange vector $\mathbf{\lambda}^*$ such that
\[
\mathcal{L}(\mathbf{u}_{t+k|t}^*, \mathbf{\lambda}^*)=0,
\]
in which $\mathcal{L}(\mathbf{u}_{t+k|t}, \mathbf{\lambda})$ is the Lagrangian of the optimization problem (\ref{eq:mpcdcbf}), i.e.,
\begin{equation}\label{eq:Lagranian}
\begin{array}{rl}
\mathcal{L}(\mathbf{u}_{t+k}, \mathbf{\lambda})&=J_t(\mathbf{x}_t,\mathbf{u}_{t|t}, \ldots, \mathbf{u}_{t+N-1|t})\\
{}&+\mathbf{\lambda}^T c_t(\mathbf{x}_t,\mathbf{u}_{t|t}, \ldots, \mathbf{u}_{t+N-1|t}).
\end{array}
\end{equation}
In (\ref{eq:Lagranian}), the expressions $J_t(\mathbf{x}_t,\mathbf{u}_{t|t}, \ldots, \mathbf{u}_{t+N-1|t})$ and $c_t(\mathbf{x}_t,\mathbf{u}_{t|t}, \ldots, \mathbf{u}_{t+N-1|t})$ are the cost and constraint corresponding to (\ref{eq:mpcdcbf}), respectively.
Besides, if we choose the penalty $\alpha$ such that \[\alpha>\|\mathbf{\lambda}^*\|_D,\]
where $\|\cdot \|_D$ denotes the dual norm with respect to the norm for $\mathbf{\zeta}_{t+k|t}$ in (\ref{our:a})
and $\mathbf{u}_{t+k|t}^*$ satisfies (\ref{mpcdcbf:b})-(\ref{mpcdcbf:f}), then the optimal solutions to (\ref{eq:mpcdcbf}) and (\ref{eq:our}) are equivalent.
\end{thm}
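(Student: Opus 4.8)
The proof proceeds in two stages, mirroring the two claims in the statement.

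\emph{Stage 1 (existence of the multiplier).} I would obtain $\mathbf{\lambda}^*$ from the first‑order (KKT) optimality conditions for (\ref{eq:mpcdcbf}). Condensing the states via the dynamics (\ref{mpcdcbf:b})--(\ref{mpcdcbf:e}) so that $J_t$ and $c_t$ depend on $\mathbf{u}_{t|t},\ldots,\mathbf{u}_{t+N-1|t}$ only, and assuming a constraint qualification (LICQ, or Slater's condition in the convex case) holds at the optimizer, there exists $\mathbf{\lambda}^*\ge 0$ such that the stationarity condition for the Lagrangian (\ref{eq:Lagranian}) holds at $\mathbf{u}^*_{t+k|t}$, together with complementary slackness $(\mathbf{\lambda}^*)^Tc_t(\mathbf{x}_t,\mathbf{u}^*_{t|t},\ldots,\mathbf{u}^*_{t+N-1|t})=0$. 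Here $\mathbf{\lambda}^*$ is understood to collect the multipliers of the constraints (\ref{mpcdcbf:f}) being softened, while (\ref{mpcdcbf:b})--(\ref{mpcdcbf:e}) remain as the common feasible set of both problems.

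\emph{Stage 2 (exact penalty).} The plan is to reduce (\ref{eq:our}) to a penalty problem in $\mathbf{u}$ alone and then chain inequalities. For fixed $\mathbf{u}$ the inner minimization over $\mathbf{\zeta}$ in (\ref{our:a})--(\ref{our:f}) is separable and solved by $\zeta^{i*}_{t+k|t}=\max\{0,\,(1-\gamma)h_i(\mathbf{x}_{t+k|t},\mathbf{o}^i_{t+k|t})-h_i(\mathbf{x}_{t+k+1|t},\mathbf{o}^i_{t+k+1|t})\}$, i.e.\ the componentwise positive part $[c^k_t(\mathbf{u})]^+\in\mathbb{R}^{N_o}$ of the step‑$k$ residual block; hence (\ref{eq:our}) is equivalent to minimizing $J_t(\mathbf{u})+\alpha\sum_{k=0}^{N-1}\|[c^k_t(\mathbf{u})]^+\|$ over (\ref{mpcdcbf:b})--(\ref{mpcdcbf:e}). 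For any such feasible $\mathbf{u}$, the generalized Cauchy--Schwarz/Hölder inequality gives $(\mathbf{\lambda}^{*,k})^Tv\le\|\mathbf{\lambda}^{*,k}\|_D\|v\|$ for $v\ge0$, and since $\|\mathbf{\lambda}^{*,k}\|_D\le\|\mathbf{\lambda}^*\|_D<\alpha$ for the usual $\ell_p$ norms, we get $\alpha\sum_k\|[c^k_t(\mathbf{u})]^+\|\ge(\mathbf{\lambda}^*)^T[c_t(\mathbf{u})]^+$; using $\mathbf{\lambda}^*\ge0$ and $[c_t(\mathbf{u})]^+\ge c_t(\mathbf{u})$ componentwise this is $\ge(\mathbf{\lambda}^*)^Tc_t(\mathbf{u})$. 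Adding $J_t(\mathbf{u})$ yields $\mathcal{L}(\mathbf{u},\mathbf{\lambda}^*)$, which is bounded below by its minimum over the feasible set — attained at $\mathbf{u}^*$ by Stage 1's stationarity (plus convexity, or, locally, the second‑order sufficient conditions) — and that minimum equals $J_t(\mathbf{u}^*)$ by complementary slackness. So the penalized objective is everywhere $\ge J_t(\mathbf{u}^*)$, with equality forcing every residual $[c^k_t(\mathbf{u})]^+$ to vanish; since $\mathbf{u}^*$ with $\mathbf{\zeta}=0$ attains $J_t(\mathbf{u}^*)$, it solves (\ref{eq:our}), and conversely any solution of (\ref{eq:our}) has zero constraint violation, hence is feasible and optimal for (\ref{eq:mpcdcbf}) — giving the asserted equivalence.

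The main obstacle I anticipate is the gap between global and local optimality: because $f$ and the barrier functions $h_i$ are nonlinear, $\mathcal{L}(\cdot,\mathbf{\lambda}^*)$ is not globally minimized at $\mathbf{u}^*$ in general, so the step $\mathcal{L}(\mathbf{u},\mathbf{\lambda}^*)\ge\mathcal{L}(\mathbf{u}^*,\mathbf{\lambda}^*)$ requires either a convexity hypothesis on (\ref{eq:mpcdcbf}) or a localization, stating the equivalence for optimizers in a neighbourhood of $\mathbf{u}^*$ and invoking the classical exact‑penalty theory \cite{di1989exact} to make $\mathbf{u}^*$ a strict local minimizer of the penalty function. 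A secondary, purely bookkeeping point is to keep the dual norm $\|\cdot\|_D$ used on the per‑step multiplier blocks $\mathbf{\lambda}^{*,k}$ consistent with the norm chosen for $\mathbf{\zeta}_{t+k|t}$ in (\ref{our:a}), and to verify that passing to a sub‑block does not increase the dual norm, which holds for the standard choices.
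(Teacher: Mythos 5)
Your proposal is correct and follows essentially the same route as the paper, whose entire proof is a citation to Fletcher's Theorems 14.2.1 and 14.3.1 (KKT multiplier existence plus the classical exact-penalty equivalence) — you have simply written out the argument the paper delegates to that reference. Your closing caveat about needing convexity or a localization to get $\mathcal{L}(\mathbf{u},\mathbf{\lambda}^*)\geq\mathcal{L}(\mathbf{u}^*,\mathbf{\lambda}^*)$ is well taken, since the cited theorems are indeed local statements for nonconvex problems and the paper's global phrasing glosses over this.
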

\begin{proof}[Proof]
See \cite{fletcher2000practical} Thm 14.2.1, 14.3.1.
\end{proof}

\begin{rem}
    It is noted that if we choose the penalty weight $\alpha$ such that
\[
\alpha>\max\limits_{\mathbf{x}_t}\|\lambda^*\|_{D},
\]
then for all initial states $\mathbf{x}_t$, if the optimization problem (\ref{eq:mpcdcbf}) is feasible, then the optimal solution of (\ref{eq:mpcdcbf}) and (\ref{eq:our}) are equivalent. 

In practice, it is not easy to determine $\max_{\mathbf{x}_t}\|\lambda^*\|_D$, as the terms $J_t, c_t$ in the Lagrangian (\ref{eq:Lagranian}) change with respect to the state $\mathbf{x}_t$. Hence, 
before designing a soft constrained predictive control problem, we select a number of initial states $\mathbf{x}_t$ to estimate different $\alpha$ and then choose the maximum one as the penalty factor.
\end{rem}

After the softening, if the optimization problem (\ref{eq:mpcdcbf}) is infeasible, we can use the optimization problem (\ref{eq:our}) to find a solution. However, this solution may not necessarily be feasible for (\ref{eq:mpcdcbf}).

\subsection{Safety Enhancement with D-GCBF}
In Section \ref{sec:soft_constrained}, the infeasibility problem of the obstacle avoidance problem (\ref{eq:mpcdcbf}) is solved by softening the CBF hard constraints. In this way, the softened problem (\ref{eq:our}) is always feasible. However, the solutions of (\ref{eq:our}) and (\ref{eq:mpcdcbf}) are equivalent only when (\ref{eq:mpcdcbf}) is feasible. In safety-critical situations, conflicting with obstacles is not desirable, and further efforts should be devoted to enhancing safety in SCMPC-CBF.


Common methods for ensuring safety are to impose a control invariant set \cite{anevlavis2021controlled}, and in our previous work \cite{10342054}, by constructing a safety filter \cite{tearle2021predictive}, which is basically a control invariant set, the safety of reinforcement-learning generated controller can be improved. Here, we adopt similar ideas of the safety filter, and considering that control invariant sets are difficult to calculate for high-dimensional nonlinear systems, the dynamic generalized CBF (D-GCBF) is employed.
In order to design the D-GCBF, 
the relative degree of the state constraint to the system (\ref{our:b}) is first considered.

\begin{defn}[Relative-degree \cite{sun2003initial}]
    The state constraint $h(\mathbf{x}_t)$ of system ($\ref{eq:rbsys}$) has relative-degree $d$ with respect to control input $\mathbf{u}_t$ if 
\begin{equation}
\frac{\partial h(\mathbf{x}_{t+j})}{\partial \mathbf{u}_t}=0,\ \frac{\partial h(\mathbf{x}_{t+d})}{\partial \mathbf{u}_t}\neq0,
\end{equation}
for $\forall j\in\{0,1,\dots,d-1\}$, $\forall \mathbf{x}\in\mathbb{R}^{n}$.
\end{defn}
That is, the relative degree $d$ is the delay step at which the control input $\mathbf{u}_t$ appears in $\mathbf{y}_t$. Therefore, it is valid to impose safety constraints at time step $d$ but not at time step $j$.

By incorporating one-step state constraint at time step $d$, the optimization problem can benefit from a wider feasible region and improved computational efficiency, as opposed to including state constraints across $d$ steps \cite{ma2021feasibility}. We proposed D-GCBF in dynamic scenarios based on the results of \cite{ma2021feasibility} and the definition of relative degree.
\begin{defn}[D-GCBF]
    Consider the discrete-time system ($\ref{eq:rbsys}$). Given a set $\mathcal{C}$ defined by ($\ref{eq:ss}$)  for a function $h:\mathbb{R}^{n}\times\mathbb{R}^{n_o}\rightarrow\mathbb{R}$, the function $h$ is a dynamic generalized  CBF if
\begin{equation}
    \label{eq:hx3}
h_i(\mathbf{x}_{t+d},\mathbf{o}^i_{t+d})\geq(1-\eta)^dh_i(\mathbf{x}_t,\mathbf{o}^i_t),
\end{equation}
where the constant $\eta\in(\gamma,1]$. 
\end{defn}

\begin{figure}[!t]
\centering
\includegraphics[width=0.35\textwidth]{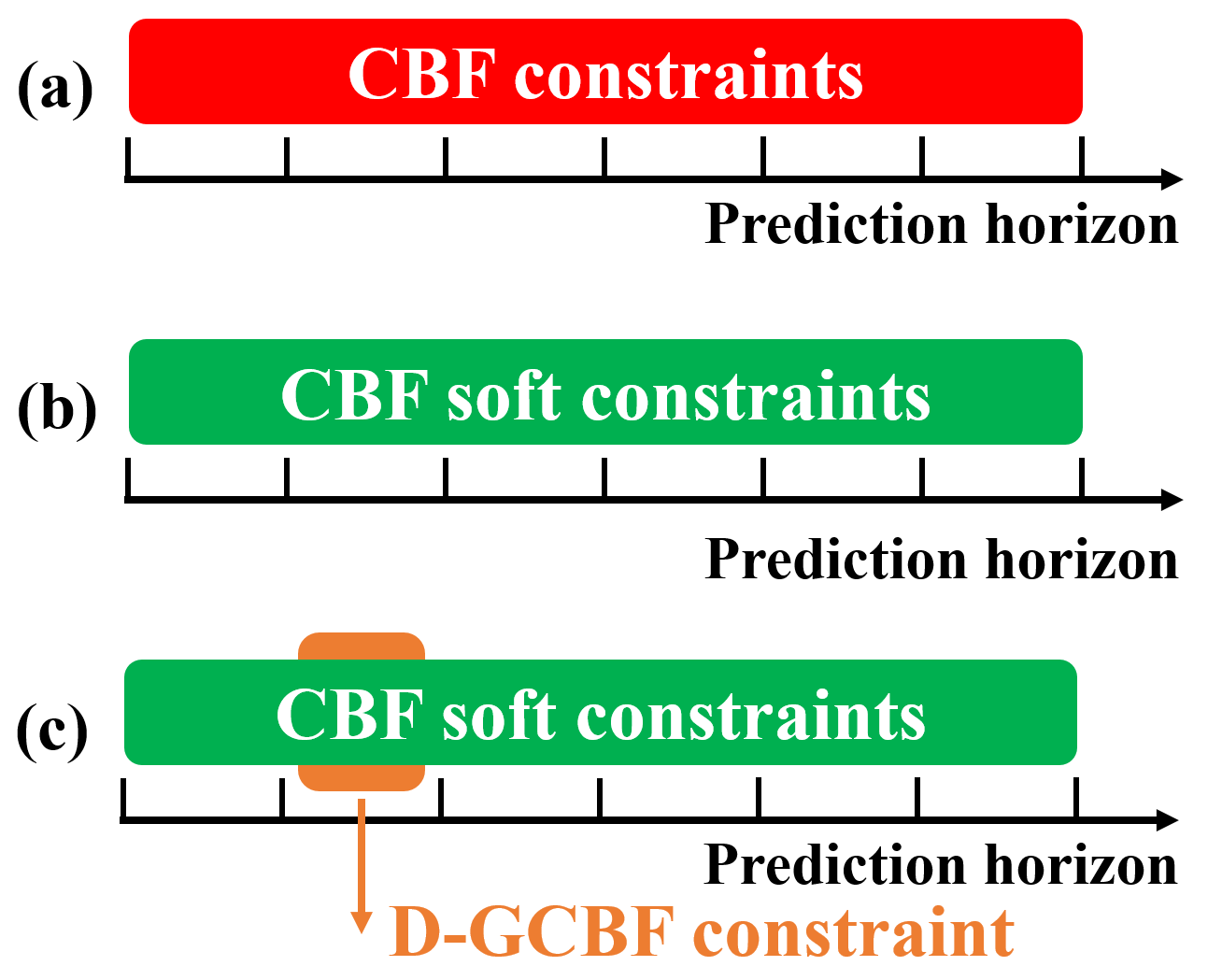}
\caption{Comparison between our obstacle avoidance constraints and those previously used. (a) shows that the CBF is imposed as hard constraints across the entire prediction horizon, and (b) represents that the CBF soft constraints functioning over this whole prediction horizon. Based on the formulation for (b), (c) introduces an additional D-GCBF constraint, which is hard and can enhance feasibility of the robot.}
\label{comparison}
\end{figure}

The reason $\eta$ is lower bounded by $\gamma$ is that we don't want the hard constraint (\ref{eq:hx3}) to be stricter than the soft constraint (\ref{our:f}). Upon integrating the D-GCBF constraint, a comparison between our obstacle avoidance constraints and those previously used is depicted in Fig. \ref{comparison}. As can be seen from Fig. \ref{comparison}, when utilizing CBF as hard constraints, it can potentially lead to infeasibility issues. By converting these into soft constraints, the optimization problem will always remain solvable. Simultaneously, the addition of a one-step D-GCBF hard constraint could possibly result in solution failure as well, but it is comparatively less stringent. The complete optimization problem, i.e., SCMPC-CBF with D-GCBF, is constructed as follows
\begin{subequations}
    \label{eq:ours}
    \begin{align}
        \min_{\mathbf{u}_{t+k|t},\mathbf{\zeta}_{t+k|t}}\ &p(\mathbf{x}_{t+N|t})+\sum^{N-1}_{k=0}q(\mathbf{x}_{t+k|t},\mathbf{u}_{t+k|t}) \notag\\
        &+\alpha \sum^{N-1}_{k=0}\|\mathbf{\zeta}_{t+k|t}\| \label{ours:a}\\
        \mathrm{s.t.}\ &\mathrm{for}\ \mathrm{all}\ k=0,\dots,N-1: \notag\\
        &\mathbf{x}_{t+k+1|t}=f(\mathbf{x}_{t+k|t},\mathbf{u}_{t+k|t}), \label{ours:b}\\
        &\mathbf{x}_{t+k|t}\in\mathcal{X}, \label{ours:c}\\
        &\mathbf{u}_{t+k|t}\in\mathcal{U}, \label{ours:d}\\
        &\mathbf{x}_{t|t}=\mathbf{x}_t, \label{ours:e}\\
        &\mathbf{x}_{t+k|t}\in\mathcal{X}(\mathbf{\zeta}_{t+k|t}),\mathbf{\zeta}_{t+k|t}\geq0, \label{ours:f} \\
        &h_i(\mathbf{x}_{t+d|t},\mathbf{o}^i_{t+d|t})\geq(1-\eta)^d h_i(\mathbf{x}_{t|t},\mathbf{o}^i_{t|t}). \label{ours:g}
    \end{align}
\end{subequations}
The addition of the constraint (\ref{ours:g}) can guarantee safety for static obstacles, which is stated in the following theorem.

\begin{thm}\label{thm:gcbf}
    For a relative-degree $d$ state constraints $h(\mathbf{x})\geq 0$ and the corresponding safe set (\ref{ss:a}), 
    assume the system satisfies $h(\mathbf{x}_{t+j})\geq 0$ for all $j\in \{1, \ldots, d-1\}$. Then
    by solving (\ref{eq:ours}) at time $t$, if feasible solutions $\mathbf{u}_{t+k|t}, k=0, \ldots, N-1$ and $\zeta_{t+k|t}, k=1, \ldots, N$ to the problem (\ref{eq:ours}) can be found, then in the next time steps, the state can be guaranteed to be within the safe set (\ref{ss:a}).
\end{thm}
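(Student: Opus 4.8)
The plan is to prove Theorem~\ref{thm:gcbf} by induction over the receding-horizon time index, using the single-step D-GCBF constraint (\ref{ours:g}) as the workhorse and the relative-degree hypothesis to bridge the $d-1$ intermediate stages that are not directly constrained, essentially a dynamic-obstacle adaptation of the GCBF feasibility argument of \cite{ma2021feasibility}. First I would note that, among the constraints of (\ref{eq:ours}), only the hard constraint (\ref{ours:g}) is relevant: the soft CBF constraints (\ref{ours:f}) and the cost (\ref{ours:a}) play no role in the safety certificate, and for a static obstacle $h_i$ depends on $\mathbf{x}$ alone, so I write $h(\mathbf{x})$ throughout and take $h(\mathbf{x}_t)\ge 0$ as given (the state starts in the safe set).

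For the base step, suppose a feasible solution of (\ref{eq:ours}) exists at time $t$. Because $\eta\in(\gamma,1]\subseteq(0,1]$ we have $1-\eta\in[0,1)$, hence $(1-\eta)^d\ge 0$, and (\ref{ours:g}) gives $h(\mathbf{x}_{t+d|t})\ge(1-\eta)^d h(\mathbf{x}_t)\ge 0$. Combined with the standing hypothesis $h(\mathbf{x}_{t+j})\ge 0$ for $j=1,\dots,d-1$, this yields $h(\mathbf{x}_{t+j})\ge 0$ for every $j=1,\dots,d$. Since only $\mathbf{u}_{t|t}^*$ is applied, the realized state obeys $\mathbf{x}_{t+1}=\mathbf{x}_{t+1|t}$, so $h(\mathbf{x}_{t+1})\ge 0$ --- directly when $d=1$, and through the hypothesis when $d>1$.

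For the inductive step I would replay this argument with $\mathbf{x}_{t+1}$ in the role of the initial state: $h(\mathbf{x}_{t+1})\ge 0$ has just been certified, the relative-degree hypothesis supplies $h(\mathbf{x}_{t+1+j})\ge 0$ for $j=1,\dots,d-1$, and a feasible solve of (\ref{eq:ours}) at $t+1$ again forces $h(\mathbf{x}_{t+1+d|t+1})\ge(1-\eta)^d h(\mathbf{x}_{t+1})\ge 0$. Iterating gives $h(\mathbf{x}_{t+k})\ge 0$ for all $k\ge 1$, i.e. the realized trajectory never leaves (\ref{ss:a}), which is the claim.

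The main obstacle --- and the step I would treat most carefully --- is the bookkeeping of the $d-1$ stages untouched by the one-step constraint (\ref{ours:g}): the induction closes only if the hypothesis $h(\mathbf{x}_{t+j})\ge 0$, $j=1,\dots,d-1$, is available at \emph{every} re-solve, not just at the initial time, so I would make explicit how the predicted states certified nonnegative at one solve cover the intermediate window of the next once the first input is committed. I would also flag that the whole conclusion is conditional on recursive feasibility of the hard constraint (\ref{ours:g}) --- precisely the theorem's proviso ``if feasible solutions can be found'' --- since, unlike the softened part of the problem, (\ref{ours:g}) may become infeasible. A last detail to check is the dynamic-obstacle case, where $h_i$ carries the extra argument $\mathbf{o}^i_{t+k|t}$: the inequality chain is identical, but the certificate transfers to the realized trajectory only if the predicted obstacle states entering (\ref{ours:g}) match the realized ones, which is why the clean statement is the one for static obstacles.
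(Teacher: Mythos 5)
Your proposal is correct and follows essentially the same route as the paper, which simply invokes the GCBF forward-invariance result of Ma et al.\ together with the single-step constraint (\ref{ours:g}) to conclude $h(\mathbf{x}_{t+1})\geq 0$; you have merely unpacked that citation into the explicit inequality $(1-\eta)^d h(\mathbf{x}_t)\geq 0$ and the induction over re-solves. The caveats you flag --- that the intermediate-step hypothesis must be available at every re-solve and that the dynamic-obstacle case requires predicted and realized obstacle states to agree --- are exactly the limitations the paper itself acknowledges in the remarks following the theorem.
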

\begin{proof}[Proof]
According to \cite{ma2021feasibility}, if the system satisfies $h(\mathbf{x}_{t+j})\geq 0$ for all $j\in \{1, \ldots, d-1\}$, then the set (\ref{ss:a})
defines a forward invariant safe set. Besides, as there is some control policy $\mathbf{u}_{t+k|t}$ such that (\ref{ours:g}) holds, the state is guaranteed to be with the safe set (\ref{ss:a}), i.e., at time $t+1$, we have
\[
h(\mathbf{x}_{t+1})\geq 0,
\]
i.e., $\mathbf{x}_{t+i}\in \mathcal{C}$.
\end{proof}

Therefore, for a fixed CBF $h(\mathbf{x})$, if the problem (\ref{eq:ours}) is feasible, the system is always safe, and the system state $\mathbf{x}$ is always in the safe set $\mathcal{C}$.

\begin{rem}
It is noted that D-GCBF simplifies multistep constraints into a single step, reducing computational complexity and enhancing feasibility. However, a one-step constraint may only partially ensure system safety\cite{zeng2021enhancing}, as the forward invariance of the set $\mathcal{C}$ (4a) is guaranteed provided that $h(\mathbf{x}_{t+j})\geq 0$ holds for $j \in \{1, \ldots, d-1\}$, which are not included in the constraints of (\ref{eq:ours}). Hence, solving (\ref{eq:ours}) alone in general cannot guarantee the safety of the system. Therefore, we use D-GCBF as a single-step safeguard to reinforce the system's safety after obtaining a potential safe trajectory through the optimization problem (\ref{eq:our}).
\end{rem}

\begin{rem}
In practice, the moving obstacles $\mathbf{o}^i, i=1, \ldots, N^o$ cause the changing of the CBF $h_i(\mathbf{x},\mathbf{o}^i)$; thus, the safe set also evolves with time. And if we can show that 
\begin{equation}\label{eq:Csubset}
\mathcal{C}_t\subset \mathcal{C}_{t+1}, \forall t
\end{equation}
where $\mathcal{C}_t$ is the safe set at time $t$, then according to Theorem \ref{thm:gcbf} we have
\[
\mathbf{x}_{t+1}\in \mathcal{C}_t\subset \mathcal{C}_{t+1},
\]
i.e., the system is always safe at future times. 

Although (\ref{eq:Csubset}) is not always satisfied, and the safety of future times can not be guaranteed, we find in experiments that D-GCBF actually enhances safety.
\end{rem}

It is noted that the hard constraint (\ref{ours:g}) can also bring the problem of infeasibility. However, we can see that the constraint (\ref{ours:g}) is far weaker than (\ref{mpcdcbf:f}), as (\ref{mpcdcbf:f}) is imposed on the entire prediction horizon, and besides, $\eta \geq \gamma$, hence the chance of infeasibility of (\ref{eq:ours}) is far less than that of (\ref{eq:mpcdcbf}). To avoid possible damage, when (\ref{eq:ours}) is infeasible, we ensure that the robot comes to a complete stop by activating the brakes instead of merely setting the control input to zero. The complete algorithm is presented as Algorithm \ref{alg:alg1}.

\begin{algorithm}[H]
    \renewcommand{\algorithmicrequire}{\textbf{Input:}}
	\renewcommand{\algorithmicensure}{\textbf{Output:}}
    \caption{SCMPC-CBF with D-GCBF}\label{alg:alg1}
    \begin{algorithmic}[1]
        \REQUIRE
        Initial state $\mathbf{x}(t)$, state constraints $\mathcal{X}$, input constraints $\mathcal{U}$, system dynamic (\ref{eq:rbsys}), obstacles state $\mathbf{o}^i(t)$, system dynamic (\ref{eq:mosys}) of obstacle, goal state $\mathbf{x}_{goal}$.
        \ENSURE
        Optimal control $\mathbf{u}(t)$.
        \STATE $\mathbf{x}_t=\mathbf{x}(t)$.
        \STATE $\mathbf{o}^i_t=\mathbf{o}^i(t)$.
        \STATE Solve (\ref{eq:ours}).
        \IF{(\ref{eq:ours}) is solved successfully}
            \RETURN $\mathbf{u}(t) = \mathbf{u}^*_{t|t}\in\mathbf{u}^*_{t+k|t}$
        \ELSE
            \STATE Activate the braking mechanism on the robot.
        \ENDIF
    \end{algorithmic}
\label{alg1}
\end{algorithm}

\section{EXPERIMENTS} \label{sec4}
In this section, experiments in simulation environments and real scenarios are conducted to verify the effectiveness of our work.
\subsection{Simulation Setup}
All controllers are implemented in Python with Casadi\cite{andersson2019casadi} as modeling language, solved with IPOPT \cite{biegler2009large}. The simulation experiments were conducted on a computer running Ubuntu 20.04, which used an Intel Core i5-12490f processor with 16 GB RAM. 

Following \cite{chen2019crowd}, each agent must stay within a 10m×10m two-dimensional space. The simulated pedestrians are controlled by ORCA \cite{van2011reciprocal}, and their initial positions are randomly sampled on a circle with a radius of 4m, and their target positions are on the other side of the same circle. The robot has the same radius of 0.3m and the same preferred speed of 1m/s as the pedestrian. And the simulation time step is 0.2s.

In order to fully evaluate the safety and effectiveness of the proposed algorithm, we set the robot to be invisible to other humans. That is, the simulated human only reacts to humans and turns a blind eye to the robot. All controller algorithms are evaluated using 500 random test cases with five pedestrians.

\subsection{Quantitative Evaluation}
All controller performances are evaluated under the same prediction horizon $N$ and the same form of stage and terminal cost. The motion model (\ref{eq:mosys}) of all obstacles is approximated by a linear model. We use the following indicators to compare the five controllers: S (the rate of the robot reaching its destination without collision), C (the rate of the robot colliding with moving obstacles), T (the robot's average navigation time in seconds), FS (the average number of solution failures), ST (the robot's average solution time in milliseconds).
\subsubsection{Double-integrator system} Consider the robot's motion model (\ref{eq:rbsys}) as a discrete-time linear double-integrator system,
\begin{equation}
    \mathbf{x}_{k+1} = A\mathbf{x}_{k} + B\mathbf{u}_{k},
\end{equation}
where $\mathbf{x}=[x,y,v_x,v_y]^T$ and $\mathbf{u}=[a_x,a_y]^T$ represent position $(x,y)$, velocity $(v_x,v_y)$ and acceleration $(a_x,a_y)$, respectively. We set $\epsilon=0.2$ in MPC-DC (\ref{eq:mpcdc}) and $\epsilon=0$ in other methods. The results are shown in Table \ref{tab:table1}.

\begin{table}[!t]
\caption{Quantitative Results With Double-integrator System\label{tab:table1}}
\centering
\begin{tabular}{|c|c|c|c|c|c|c|}
\hline
Controller & $\gamma$ & S $\uparrow$ & C $\downarrow$ & T & FS & ST\\
\hline
ORCA \cite{van2011reciprocal} & - & 0.470 & 0.526 & 11.04 & - & -\\
\hline
MPC-DC(\ref{eq:mpcdc}) & - & 0.362 & 0.636 & 12.97 & 6.486 & 45.11\\
\hline
\multirow{3}{*}{MPC-D-CBF(\ref{eq:mpcdcbf})}
 & 0.08 & 0.736 & 0.262 & 16.73 & 21.734 & 42.47\\
 & 0.10 & 0.684 & 0.316 & 14.91 & 16.602 & 42.78\\
 & 0.12 & 0.624 & 0.376 & 13.80 & 13.544 & 44.12\\
\hline
\multirow{3}{*}{SCMPC-CBF(\ref{eq:our})}
 & 0.08 & 0.952 & 0.048 & 13.45 & 0 & 53.01\\
 & 0.10 & 0.776 & 0.224 & 12.19 & 0 & 53.48\\
 & 0.12 & 0.640 & 0.360 & 11.61 & 0 & 53.33\\
\hline
\multirow{3}{*}{Ours(\ref{eq:ours})}
 & 0.08 & \textbf{0.996} & \textbf{0.004} & 14.35 & 0.374 & 55.08\\
 & 0.10 & 0.966 & 0.034 & 13.61 & 0.794 & 55.76\\
 & 0.12 & 0.954 & 0.046 & 13.34 & 1.242 & 55.51\\
\hline
\end{tabular}
\end{table}

The results show that in a dynamic environment, short-term ORCA performs poorly in the invisible setting due to a violation of the reciprocal assumption. MPC-DC still performs poorly even if an additional safety margin $\epsilon$ is added. This is because it cannot avoid dynamic obstacles in advance and easily enters high-risk areas. The safety of MPC-D-CBF will improve as $\gamma$ decreases, but correspondingly, the navigation efficiency will decrease due to tighter hard constraints. Robots driven by soft constraints will more actively explore new paths in crowded areas, and single-step safety constraints will also enhance the safety of robot navigation. Although the introduction of slack variables may slightly increase the complexity of the solution, it can prevent the extra time consumption caused by solution failure. As shown in Fig. \ref{holo}, we compared the navigation paths of these controllers in the 330th test case. 

\begin{figure*}[!t]
\centering
\subfloat[ORCA]{\includegraphics[width=0.2\textwidth]{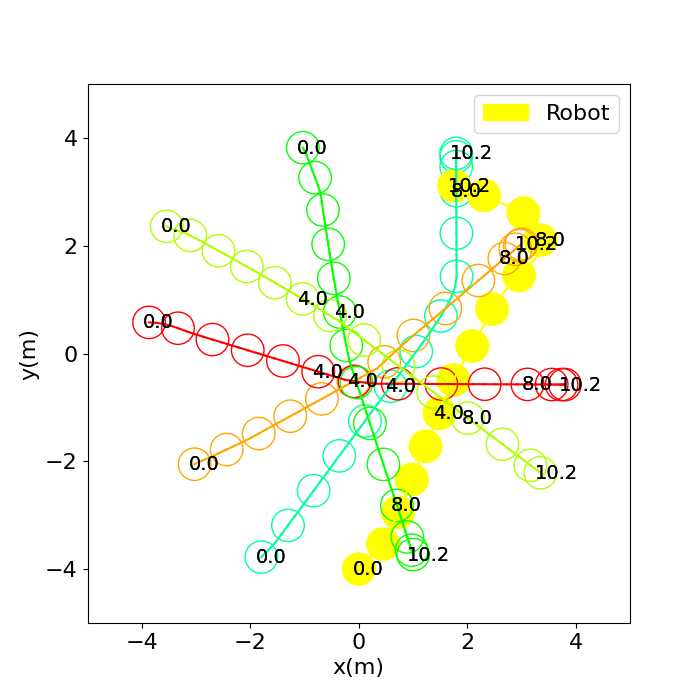}\label{ORCA_case}}
\hfil
\subfloat[MPC-DC]{\includegraphics[width=0.2\textwidth]{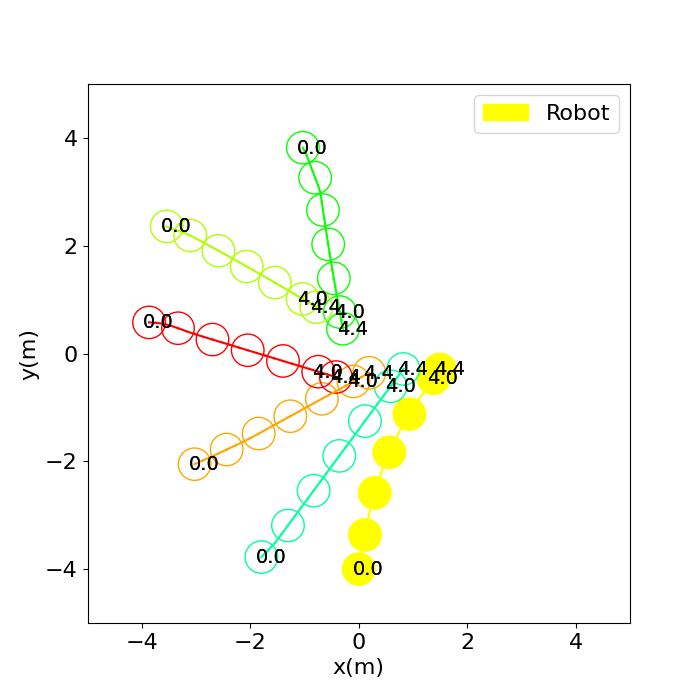}\label{MPC-DC_case}}
\hfil
\subfloat[MPC-D-CBF]{\includegraphics[width=0.2\textwidth]{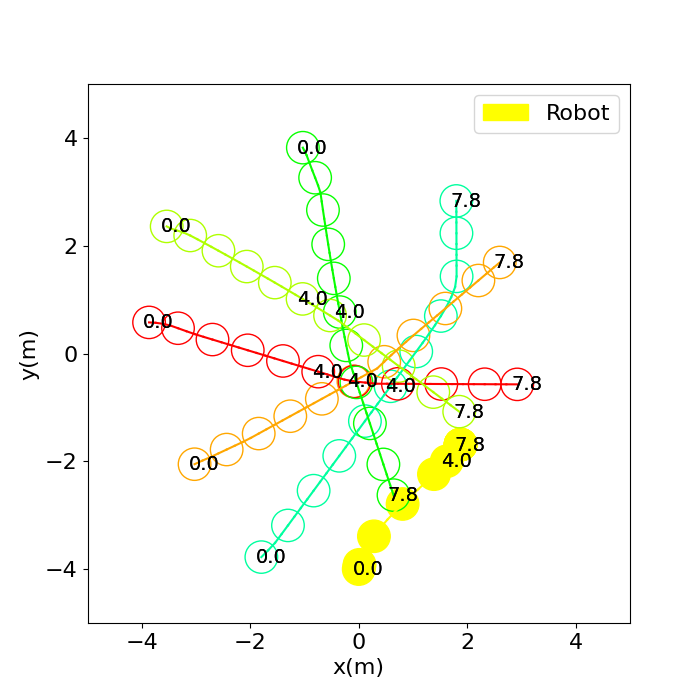}\label{MPCCBF_case}}
\hfil
\subfloat[SCMPC-CBF]{\includegraphics[width=0.2\textwidth]{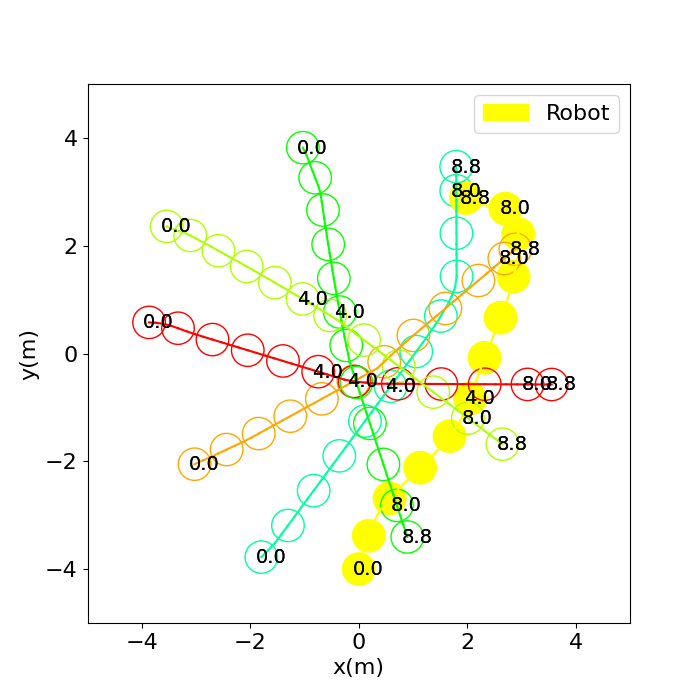}\label{SCMPC_case}}
\hfil
\subfloat[Ours]{\includegraphics[width=0.2\textwidth]{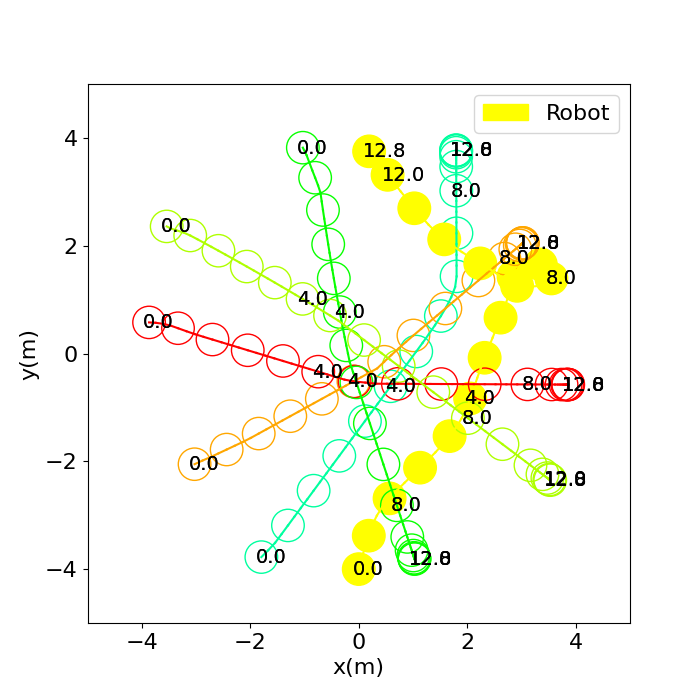}\label{OUR_case}}
\caption{Trajectories of different controllers with double-integrator kinematics system. The circles in the picture are the locations of the agents at the labeled time. In the 330th test case, our method can make the robot successfully reach the destination. More test results are shown in Table \ref{tab:table1}.}
\label{holo}
\end{figure*}

\subsubsection{Unicycle system}  Consider the robot's motion model (\ref{eq:rbsys}) as a discrete-time nonlinear unicycle system,
\begin{equation}
    \mathbf{x}_{k+1} = f(\mathbf{x}_{k},\mathbf{u}_{k}),
\end{equation}
where $\mathbf{x}=[x,y,\theta]^T$ and $\mathbf{u}=[v,\omega]^T$ represent position $(x,y)$, heading angle $\theta$, line speed $v$ and angular velocity $\omega$, respectively. VO-based methods such as ORCA are suitable for robots that can move in any direction but are not suitable for robots with non-holonomic kinematics\cite{alonso2013optimal}. Therefore, we do not compare with ORCA. We set $\epsilon=0.2$ in MPC-DC (\ref{eq:mpcdc}) and $\epsilon=0$ in other methods. The results are shown in Table \ref{tab:table2}.

\begin{table}[!t]
\caption{Quantitative Results With Unicycle System\label{tab:table2}}
\centering
\begin{tabular}{|c|c|c|c|c|c|c|}
\hline
Controller & $\gamma$ & S $\uparrow$ & C $\downarrow$ & T & FS & ST\\
\hline
MPC-DC(\ref{eq:mpcdc}) & - & 0.186 & 0.808 & 10.87 & 6.116 & 50.78\\
\hline
\multirow{3}{*}{MPC-D-CBF(\ref{eq:mpcdcbf})}
 & 0.08 & 0.848 & 0.152 & 14.40 & 8.862 & 49.07\\
 & 0.10 & 0.764 & 0.236 & 13.27 & 7.756 & 49.13\\
 & 0.12 & 0.724 & 0.276 & 12.58 & 6.976 & 49.27\\
\hline
\multirow{3}{*}{SCMPC-CBF(\ref{eq:our})}
 & 0.08 & 0.980 & 0.020 & 14.44 & 0 & 60.59\\
 & 0.10 & 0.936 & 0.064 & 13.27 & 0 & 60.75\\
 & 0.12 & 0.900 & 0.100 & 12.68 & 0 & 60.83\\
\hline
\multirow{3}{*}{Ours(\ref{eq:ours})}
 & 0.08 & \textbf{0.982} & \textbf{0.018} & 14.56 & 0.032 & 61.54\\
 & 0.10 & 0.960 & 0.040 & 13.34 & 0.128 & 62.37\\
 & 0.12 & 0.902 & 0.098 & 12.71 & 0.388 & 61.03\\
\hline
\end{tabular}
\end{table}

The results demonstrate that our method achieves a high success rate despite the reduced action space of non-holonomic kinematics systems. Given the underactuated characteristics of non-holonomic mobile robots \cite{pappalardo2019forward} (3 degrees of freedom $(x,y,\theta)$, 2 controls $(v,\omega)$), the obstacle avoidance capability of these robots is inevitably limited. As a result, the superiority of our improved method over other methods is not so apparent as that in simulation experiments for double-integrator system. Similar to the previous simulation experiment for double-integrator system, as the value of $\gamma$ decreases, the system's safety during navigation improves. However, it also yields a higher probability of solution failure. The increase in average solution time can be attributed to the system's nonlinearity. The simulation experiment conducted using the unicycle model lays the groundwork for subsequent experiments in real-world scenarios. Fig. \ref{nonholo} illustrates a comparison of the navigation paths of these controllers in the 116th test case. Our code and further examples can be found at \url{http://https://github.com/Zetao-Lu/CrowdNav_MPCCBF}

\begin{figure}[!t]
\centering
\subfloat[MPC-DC]{\includegraphics[width=0.2\textwidth]{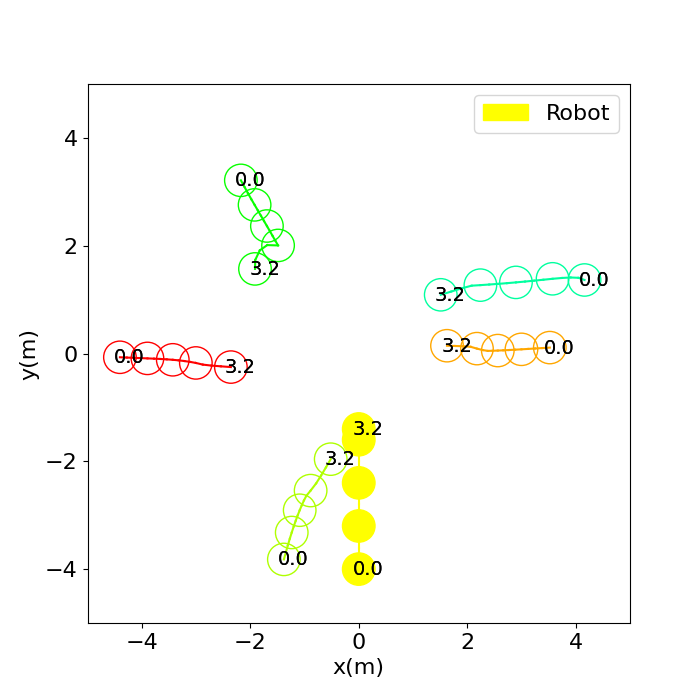}\label{MPCDCnh_case}}
\hfil
\subfloat[MPC-D-CBF]{\includegraphics[width=0.2\textwidth]{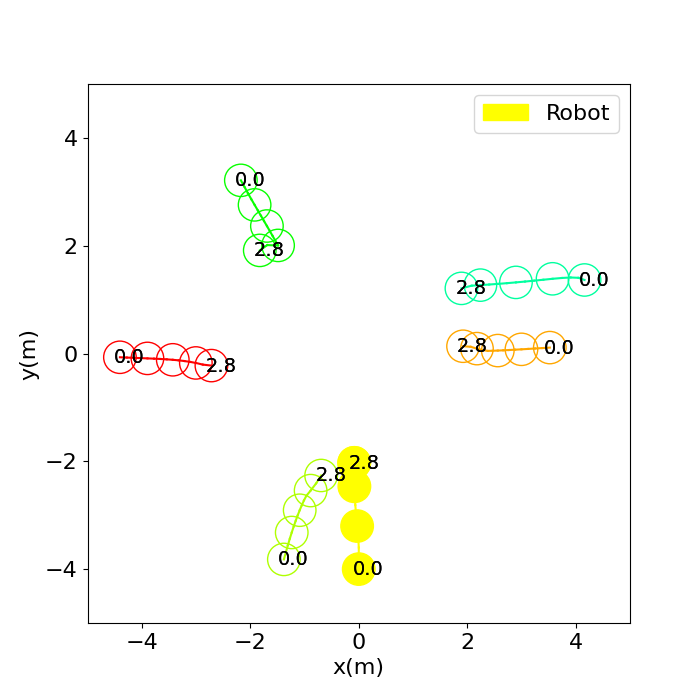}\label{MPCCBFnh_case}}
\hfil
\subfloat[SCMPC-CBF]{\includegraphics[width=0.2\textwidth]{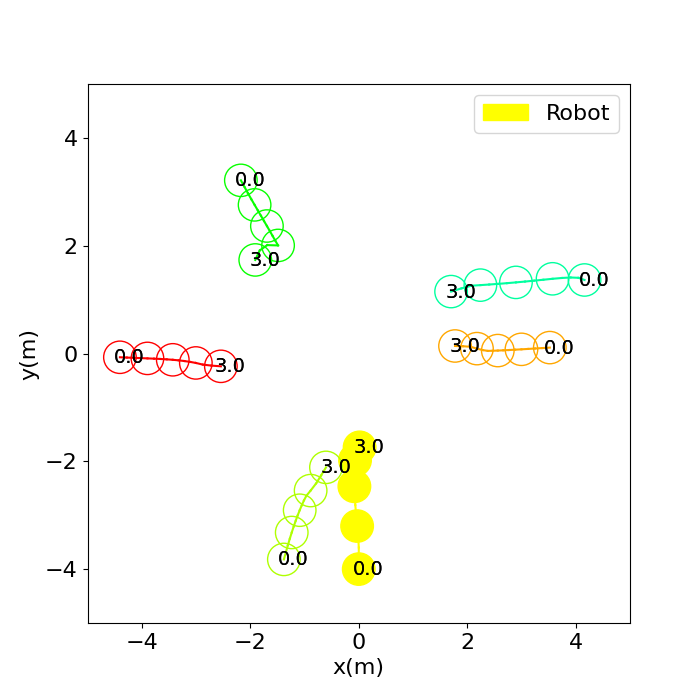}\label{SCMPCnh_case}}
\hfil
\subfloat[Ours]{\includegraphics[width=0.2\textwidth]{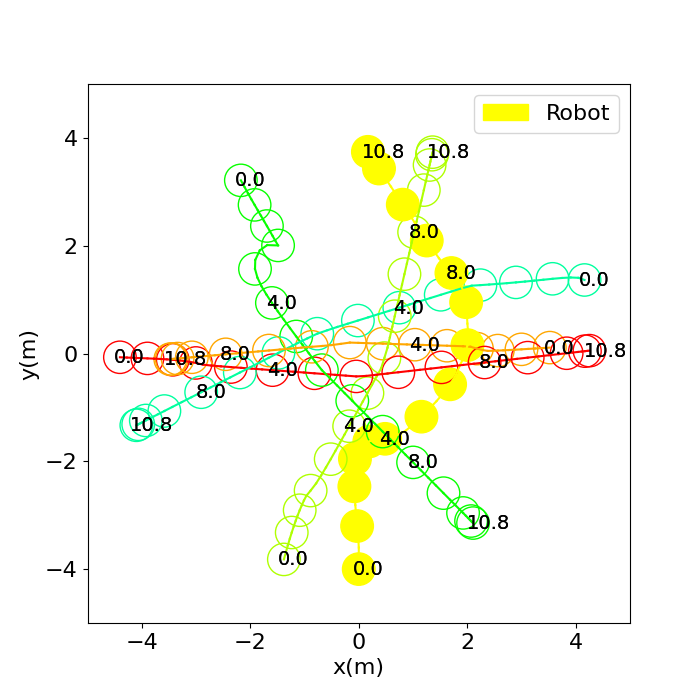}\label{OURnh_case}}
\caption{Trajectories of different controllers with unicycle system. The circles in the picture are the locations of the agents at the labeled time. In the 116th test case, our method can make the robot successfully reach the destination. More test results are shown in Table \ref{tab:table2}.}
\label{nonholo}
\end{figure}

\subsection{Real-world Experiments}
In real-world experiments, we deployed our method on an MR1000 robot. We set the sensing range of the 64-line lidar to be 8 meters and used PointPillars\cite{lang2019pointpillars} for pedestrian detection within this range. Additionally, we utilized AB3DMOT\cite{weng20203d} to track the detection results and estimate their relative positions and velocities. After building a map of real-world environment, we estimated the robot's state using the AMCL package in ROS and Kalman filter. As shown in Fig. \ref{real}, in real-world experiments, the MR1000 robot successfully navigated to the target location without colliding with pedestrians by utilizing our method as a local planning controller. The results demonstrate the successful transferability of our method from simulation to real robots as a local planning module, ensuring safety for the robot.

\begin{figure*}[!t]
\centering
\subfloat[start time]{\includegraphics[width=0.3\textwidth]{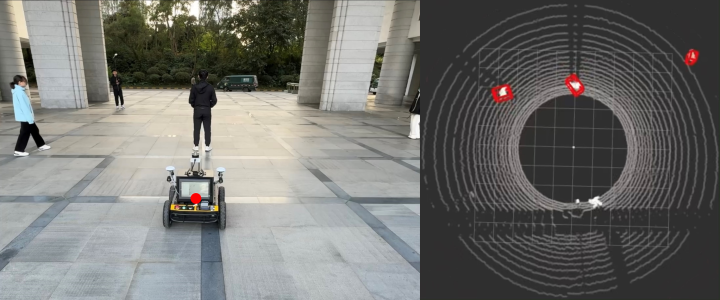}\label{0s}}
\hfil
\subfloat[after 4 seconds]{\includegraphics[width=0.3\textwidth]{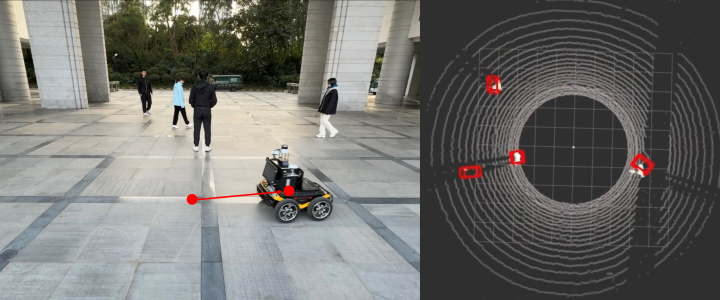}\label{4s}}
\hfil
\subfloat[after 8 seconds]{\includegraphics[width=0.3\textwidth]{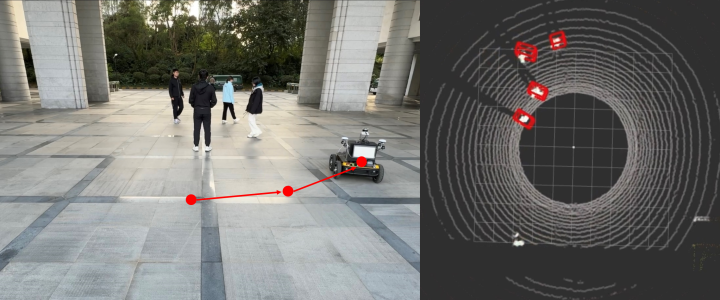}\label{8s}}
\hfil
\subfloat[after 12 seconds]{\includegraphics[width=0.3\textwidth]{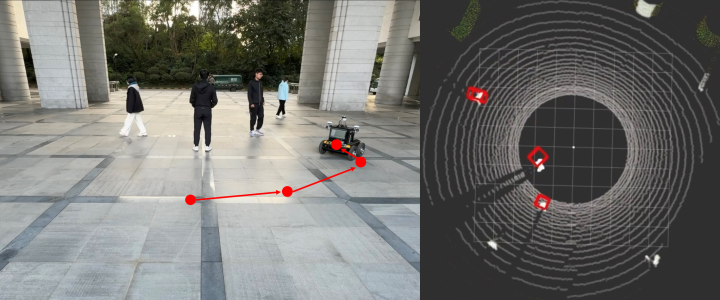}\label{12s}}
\hfil
\subfloat[after 16 seconds]{\includegraphics[width=0.3\textwidth]{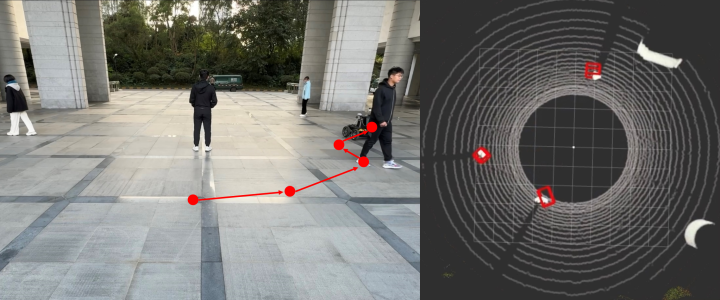}\label{16s}}
\hfil
\subfloat[end time]{\includegraphics[width=0.3\textwidth]{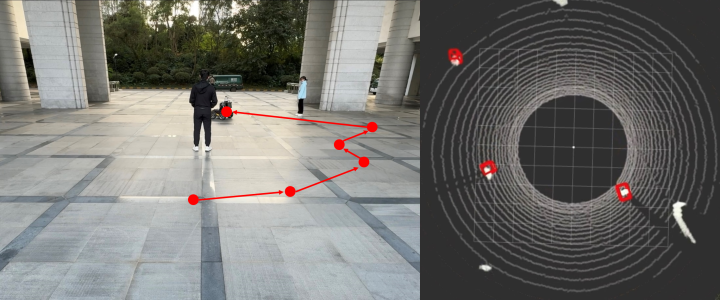}\label{20s}}
\caption{Real-world experiments have confirmed the effectiveness of the proposed collision avoidance method. The robot's navigation sequence is illustrated by these six sub-images. When pedestrians enter the sensing range of the lidar, the robot detects them. As depicted on the right side of the sub-image, perceived pedestrians are depicted by red boxes.}
\label{real}
\end{figure*}

\section{CONCLUSIONS} \label{sec5}
In this paper, we propose a new MPC framework that integrates the CBF to address the challenge of obstacle avoidance in dynamic environments while avoiding the infeasibility problem caused by hard constraints acting on the entire predictive horizon. Additionally, we design a D-GCBF based on the relative degree of constraints to the system, enhancing the robot's obstacle avoidance capability under soft constraints by employing single-step safety constraints. Experimental results demonstrate that our method achieves a higher navigation success rate, lower collision rate, and lower solution failure probability compared to other baseline methods. Furthermore, we deploy the method as a local planning controller on the MR1000 robot using the ROS platform and validate the effectiveness of our approach in real-world environments.

\bibliographystyle{IEEEtran}
\bibliography{ref}

\begin{thebibliography}{10}
\providecommand{\url}[1]{#1}
\csname url@samestyle\endcsname
\providecommand{\newblock}{\relax}
\providecommand{\bibinfo}[2]{#2}
\providecommand{\BIBentrySTDinterwordspacing}{\spaceskip=0pt\relax}
\providecommand{\BIBentryALTinterwordstretchfactor}{4}
\providecommand{\BIBentryALTinterwordspacing}{\spaceskip=\fontdimen2\font plus
\BIBentryALTinterwordstretchfactor\fontdimen3\font minus
  \fontdimen4\font\relax}
\providecommand{\BIBforeignlanguage}[2]{{%
\expandafter\ifx\csname l@#1\endcsname\relax
\typeout{** WARNING: IEEEtran.bst: No hyphenation pattern has been}%
\typeout{** loaded for the language `#1'. Using the pattern for}%
\typeout{** the default language instead.}%
\else
\language=\csname l@#1\endcsname
\fi
#2}}
\providecommand{\BIBdecl}{\relax}
\BIBdecl

\bibitem{lee2023survey}
A.~J. Lee, W.~Song, B.~Yu, D.~Choi, C.~Tirtawardhana, and H.~Myung, ``Survey of
  robotics technologies for civil infrastructure inspection,'' \emph{Journal of
  Infrastructure Intelligence and Resilience}, vol.~2, no.~1, p. 100018, 2023.

\bibitem{zeng2021safety}
J.~Zeng, B.~Zhang, and K.~Sreenath, ``Safety-critical model predictive control
  with discrete-time control barrier function,'' in \emph{2021 American Control
  Conference (ACC)}.\hskip 1em plus 0.5em minus 0.4em\relax IEEE, 2021, pp.
  3882--3889.

\bibitem{jian2023dynamic}
Z.~Jian, Z.~Yan, X.~Lei, Z.~Lu, B.~Lan, X.~Wang, and B.~Liang, ``Dynamic
  control barrier function-based model predictive control to safety-critical
  obstacle-avoidance of mobile robot,'' in \emph{2023 IEEE International
  Conference on Robotics and Automation (ICRA)}.\hskip 1em plus 0.5em minus
  0.4em\relax IEEE, 2023, pp. 3679--3685.

\bibitem{fiorini1998motion}
P.~Fiorini and Z.~Shiller, ``Motion planning in dynamic environments using
  velocity obstacles,'' \emph{The International Journal of Robotics Research},
  vol.~17, no.~7, pp. 760--772, 1998.

\bibitem{van2011reciprocal}
J.~Van Den~Berg, S.~J. Guy, M.~Lin, and D.~Manocha, ``Reciprocal n-body
  collision avoidance,'' in \emph{Robotics Research: The 14th International
  Symposium ISRR}.\hskip 1em plus 0.5em minus 0.4em\relax Springer, 2011, pp.
  3--19.

\bibitem{gonon2021reactive}
D.~J. Gonon, D.~Paez-Granados, and A.~Billard, ``Reactive navigation in crowds
  for non-holonomic robots with convex bounding shape,'' \emph{IEEE Robotics
  and Automation Letters}, vol.~6, no.~3, pp. 4728--4735, 2021.

\bibitem{chen2021interactive}
Y.~Chen, F.~Zhao, and Y.~Lou, ``Interactive model predictive control for robot
  navigation in dense crowds,'' \emph{IEEE Transactions on Systems, Man, and
  Cybernetics: Systems}, vol.~52, no.~4, pp. 2289--2301, 2021.

\bibitem{chen2019crowd}
C.~Chen, Y.~Liu, S.~Kreiss, and A.~Alahi, ``Crowd-robot interaction:
  Crowd-aware robot navigation with attention-based deep reinforcement
  learning,'' in \emph{2019 International Conference on Robotics and Automation
  (ICRA)}, 2019, pp. 6015--6022.

\bibitem{yang2023rmrl}
H.~Yang, C.~Yao, C.~Liu, and Q.~Chen, ``{RMRL}: Robot navigation in crowd
  environments with risk map-based deep reinforcement learning,'' \emph{IEEE
  Robotics and Automation Letters}, 2023.

\bibitem{boldrer2022multi}
M.~Boldrer, A.~Antonucci, P.~Bevilacqua, L.~Palopoli, and D.~Fontanelli,
  ``Multi-agent navigation in human-shared environments: A safe and
  socially-aware approach,'' \emph{Robotics and Autonomous Systems}, vol. 149,
  p. 103979, 2022.

\bibitem{lindqvist2020nonlinear}
B.~Lindqvist, S.~S. Mansouri, A.-a. Agha-mohammadi, and G.~Nikolakopoulos,
  ``Nonlinear {MPC} for collision avoidance and control of {UAVs} with dynamic
  obstacles,'' \emph{IEEE Robotics and Automation Letters}, vol.~5, no.~4, pp.
  6001--6008, 2020.

\bibitem{brito2019model}
B.~Brito, B.~Floor, L.~Ferranti, and J.~Alonso-Mora, ``Model predictive
  contouring control for collision avoidance in unstructured dynamic
  environments,'' \emph{IEEE Robotics and Automation Letters}, vol.~4, no.~4,
  pp. 4459--4466, 2019.

\bibitem{thirugnanam2022safety}
A.~Thirugnanam, J.~Zeng, and K.~Sreenath, ``Safety-critical control and
  planning for obstacle avoidance between polytopes with control barrier
  functions,'' in \emph{2022 International Conference on Robotics and
  Automation (ICRA)}.\hskip 1em plus 0.5em minus 0.4em\relax IEEE, 2022, pp.
  286--292.

\bibitem{zeng2021enhancing}
J.~Zeng, Z.~Li, and K.~Sreenath, ``Enhancing feasibility and safety of
  nonlinear model predictive control with discrete-time control barrier
  functions,'' in \emph{2021 60th IEEE Conference on Decision and Control
  (CDC)}.\hskip 1em plus 0.5em minus 0.4em\relax IEEE, 2021, pp. 6137--6144.

\bibitem{ma2021feasibility}
H.~Ma, X.~Zhang, S.~E. Li, Z.~Lin, Y.~Lyu, and S.~Zheng, ``Feasibility
  enhancement of constrained receding horizon control using generalized control
  barrier function,'' in \emph{2021 4th IEEE International Conference on
  Industrial Cyber-Physical Systems (ICPS)}.\hskip 1em plus 0.5em minus
  0.4em\relax IEEE, 2021, pp. 551--557.

\bibitem{zhang2020optimization}
X.~Zhang, A.~Liniger, and F.~Borrelli, ``Optimization-based collision
  avoidance,'' \emph{IEEE Transactions on Control Systems Technology}, vol.~29,
  no.~3, pp. 972--983, 2020.

\bibitem{ames2019control}
A.~D. Ames, S.~Coogan, M.~Egerstedt, G.~Notomista, K.~Sreenath, and P.~Tabuada,
  ``Control barrier functions: Theory and applications,'' in \emph{2019 18th
  European Control Conference (ECC)}, 2019, pp. 3420--3431.

\bibitem{di1989exact}
G.~Di~Pillo and L.~Grippo, ``Exact penalty functions in constrained
  optimization,'' \emph{SIAM Journal on Control and Optimization}, vol.~27,
  no.~6, pp. 1333--1360, 1989.

\bibitem{kerrigan2000soft}
E.~C. Kerrigan and J.~M. Maciejowski, ``Soft constraints and exact penalty
  functions in model predictive control,'' \emph{Proc. UKACC International
  Conference (Control 2000)}, 2000.

\bibitem{fletcher2000practical}
R.~Fletcher, \emph{Practical methods of optimization}.\hskip 1em plus 0.5em
  minus 0.4em\relax John Wiley \& Sons, 2000.

\bibitem{anevlavis2021controlled}
T.~Anevlavis, Z.~Liu, N.~Ozay, and P.~Tabuada, ``Controlled invariant sets:
  implicit closed-form representations and applications,'' \emph{arXiv preprint
  arXiv:2107.08566}, 2021.

\bibitem{10342054}
K.~Feng, Z.~Lu, J.~Xu, H.~Chen, and Y.~Lou, ``A safety filter for realizing
  safe robot navigation in crowds,'' in \emph{2023 IEEE/RSJ International
  Conference on Intelligent Robots and Systems (IROS)}, 2023, pp. 9729--9736.

\bibitem{tearle2021predictive}
B.~Tearle, K.~P. Wabersich, A.~Carron, and M.~N. Zeilinger, ``A predictive
  safety filter for learning-based racing control,'' \emph{IEEE Robotics and
  Automation Letters}, vol.~6, no.~4, pp. 7635--7642, 2021.

\bibitem{sun2003initial}
M.~Sun and D.~Wang, ``Initial shift issues on discrete-time iterative learning
  control with system relative degree,'' \emph{IEEE Transactions on Automatic
  Control}, vol.~48, no.~1, pp. 144--148, 2003.

\bibitem{andersson2019casadi}
J.~A. Andersson, J.~Gillis, G.~Horn, J.~B. Rawlings, and M.~Diehl, ``{CasADi}:
  a software framework for nonlinear optimization and optimal control,''
  \emph{Mathematical Programming Computation}, vol.~11, pp. 1--36, 2019.

\bibitem{biegler2009large}
L.~T. Biegler and V.~M. Zavala, ``Large-scale nonlinear programming using
  {IPOPT}: An integrating framework for enterprise-wide dynamic optimization,''
  \emph{Computers \& Chemical Engineering}, vol.~33, no.~3, pp. 575--582, 2009.

\bibitem{alonso2013optimal}
J.~Alonso-Mora, A.~Breitenmoser, M.~Rufli, P.~Beardsley, and R.~Siegwart,
  ``Optimal reciprocal collision avoidance for multiple non-holonomic robots,''
  in \emph{Distributed autonomous robotic systems: The 10th international
  symposium}.\hskip 1em plus 0.5em minus 0.4em\relax Springer, 2013, pp.
  203--216.

\bibitem{pappalardo2019forward}
C.~M. Pappalardo and D.~Guida, ``Forward and inverse dynamics of a
  unicycle-like mobile robot,'' \emph{Machines}, vol.~7, no.~1, p.~5, 2019.

\bibitem{lang2019pointpillars}
A.~H. Lang, S.~Vora, H.~Caesar, L.~Zhou, J.~Yang, and O.~Beijbom,
  ``{PointPillars}: Fast encoders for object detection from point clouds,'' in
  \emph{Proceedings of the IEEE/CVF Conference on Computer Vision and Pattern
  Recognition (CVPR)}, June 2019.

\bibitem{weng20203d}
X.~Weng, J.~Wang, D.~Held, and K.~Kitani, ``3d multi-object tracking: A
  baseline and new evaluation metrics,'' in \emph{2020 IEEE/RSJ International
  Conference on Intelligent Robots and Systems (IROS)}.\hskip 1em plus 0.5em
  minus 0.4em\relax IEEE, 2020, pp. 10\,359--10\,366.

\end{thebibliography}

\vfill
\end{document}